\pgfplotsset{compat=1.15}
\setlist{nolistsep}
\theoremstyle{definition}
\DeclareMathOperator*{\argmin}{arg\,min}
\DeclarePairedDelimiterX{\inp}[2]{\langle}{\rangle}{#1, #2}
\def\R{\mathbb{R}}
\def\mR{\mathcal{R}}
\def\mG{\mathcal{G}}
\def\mF{\mathcal{F}}
\def\mS{\mathcal{S}}
\def\fl{\mF_{\lambda}}
\newcommand{\mypara}[1]{{\smallskip \noindent \bf #1}\hspace{0.1in}}
\definecolor{dkgreen}{rgb}{0,0.6,0}
\definecolor{gray}{rgb}{0.5,0.5,0.5}
\definecolor{mauve}{rgb}{0.58,0,0.82}
\setlist{nolistsep}
\title{Personalized Federated Learning with Attention-based Client Selection}
\author{Zihan Chen, Jundong Li, Cong Shen}
\date{University of Virginia}
\begin{document}

\maketitle

\begin{abstract}
Personalized Federated Learning (PFL) relies on collective data knowledge to build customized models. However, non-IID data between clients poses significant challenges, as collaborating with clients who have diverse data distributions can harm local model performance, especially with limited training data. To address this issue, we propose \textsc{FedACS}, a new PFL algorithm with an \underline{A}ttention-based \underline{C}lient \underline{S}election mechanism. \textsc{FedACS} integrates an attention mechanism to enhance collaboration among clients with similar data distributions and mitigate the data scarcity issue. It prioritizes and allocates resources based on data similarity. We further establish the theoretical convergence behavior of \textsc{FedACS}. Experiments on CIFAR10 and FMNIST validate \textsc{FedACS}'s superiority, showcasing its potential to advance personalized federated learning. By tackling non-IID data challenges and data scarcity, \textsc{FedACS} offers promising advances in the field of personalized federated learning.
\end{abstract}

\section{Introduction}

Federated learning is a collaborative learning paradigm that allows multiple clients to work together while ensuring the preservation of their privacy \cite{yang2019federated}. By leveraging the collective knowledge and data from all participating clients, federated learning aims to achieve better learning performance compared with individual client efforts \cite{mcmahan2017communication}. This collaborative nature has made federated learning increasingly popular, finding numerous practical applications where data decentralization and privacy are paramount. The privacy-preserving solutions offered by federated learning have found extensive applications in domains such as healthcare, smart cities, and finance \cite{zheng2022applications,yang2019ffd,xu2021federated}.

However, the effectiveness of the collaborative approach in federated learning is highly dependent on the distribution of data among the clients. While federated learning performs exceptionally well when data distribution among clients is independent and identically distributed (IID), this is not the case in many real-world scenarios \cite{kairouz2021advances}. When the global model, which is collectively trained across decentralized clients, encounters diverse datasets with varying statistical characteristics, it may face challenges in effectively generalizing to the unique local data of each client \cite{zhao2018federated,jiang2019improving}. The performance of the global model may be suboptimal on certain clients' data due to differences in data distributions and patterns. This limitation becomes more pronounced as the diversity among local data from different clients continues to increase \cite{deng2020adaptive}.

To address this issue, personalized federated learning (PFL) techniques have been proposed, which aim to overcome the limitations of traditional federated learning by incorporating personalized adaptation in local clients. Instead of relying solely on a single global model, PFL allows each client to tailor the model to her specific data distribution, preferences, and local context \cite{kulkarni2020survey}. Most existing PFL methods focus on taking the global model as an initial model for local training or incorporating the global model into the loss function \cite{deng2020adaptive,li2021ditto,t2020personalized,fallah2020personalized}. However, recent work by Huang et al. \cite{huang2021personalized} challenges the assumption that a single global model can adequately fit all clients in personalized cross-silo federated learning with non-IID data. In particular, they argue that the misassumption of a universally applicable global model remains a fundamental bottleneck \cite{huang2021personalized}. However, their claims lack substantial theoretical foundations or sufficient empirical evidence. To shed light on this issue, Mansour et al. \cite{mansour2020three} provide theoretical insights into the generalization of the uniform global model and demonstrate that the discrepancy between local and global distributions influences the disparity between local and global models. In this study, we present an empirical example to further validate the idea that a single global model cannot adequately fit all clients. Table \ref{tab:result_intro} presents the learning performance of local training models and two PFL methods \cite{fallah2020personalized,li2021ditto}. For each dataset, we utilize a Dirichlet distribution with the parameter of 0.5 to partition it, and subsequently, we sample 50 data points from the training dataset for the training process.  Specifically, \textsc{Ditto-P} and \textsc{PerAvg-P} represent the results of personalized models, while \textsc{Ditto-G} and \textsc{PerAvg-G} reflect the performance of global models generated by the respective algorithms. It is evident from the table that the global models produced by the PFL methods struggle to achieve satisfactory generalization across clients and, in some cases, even perform worse than the locally trained models. This empirical example highlights the need for novel approaches that go beyond the simple fine-tuning of global models and account for the inherent heterogeneity in data distributions among clients.
\begin{table}
	\caption{Test accuracy of global and personalized models.}
	\centering
	\begin{tabular}{l|cc}
		\toprule

		Dataset     &CIFAR10 dir = 0.5     &FMNIST dir = 0.5 \\
		\midrule
		Local  &77.29  &75.98   \\
        \textsc{Ditto-P} &77.33	&78.79	\\
        \textsc{Ditto-G} &39.47	&76.34	\\
        \textsc{PerAvg-P} &73.07	&74.07	\\
        \textsc{PerAvg-G} &28.90	&69.14	\\
		\bottomrule
	\end{tabular}
	\label{tab:result_intro}
\end{table}

Motivated by the insights gained from \cite{huang2021personalized} and recognizing the challenges posed by the scarcity of data in specific areas such as medical care, in this paper, we propose a PFL algorithm with an \underline{A}ttention-based \underline{C}lient \underline{S}election mechanism (\textsc{FedACS}) that aims to overcome the dependency on a single global model and leverage the model information from other clients to develop a personalized model for each client. By employing an attention mechanism, \textsc{FedACS} allows clients to receive and process personalized messages tailored to their specific model parameters, enhancing the effectiveness of collaboration. Additionally, we provide the convergence of our proposed \textsc{FedACS} method in general scenarios, ensuring the reliability and stability of the algorithm. This demonstrates that the proposed method can effectively converge to optimal solutions for personalized models in general settings. To evaluate the effectiveness of the proposed methods, we conduct extensive experiments on various datasets and settings, allowing for a comprehensive performance evaluation. The results of the experiments demonstrate the superior performance of the proposed methods compared to those of existing approaches. 

\section{Personalized Federated Learning}
\label{sec:method}
\subsection{Formulation}
Standard federated learning (FL) aims to train a global model by aggregating local models contributed by multiple clients without the need for their raw data to be centralized. In FL, there are $n$ clients communicating with a server to solve the problem:
$\min_{w} \frac{1}{n}\sum_{i=1}^{n} \mathbb{F}_{i}(w)$ to find a global model $w$. The function $\mathbb{F}_{i}$ denotes the expected loss over the data distribution of the client $i$. One of the first FL algorithms is \textsc{FedAvg} \cite{mcmahan2017communication}, which uses local stochastic gradient descent (SGD) updates and
builds a global model from a subset of clients. 

However, the performance of the global model tends to degrade when faced with clients whose data distributions differ significantly from the global training data distribution. On the other hand, local models trained on the respective data distributions match the distributions at inference time, but their generalization capabilities are limited due to the data scarcity issue \cite{mansour2020three}. Personalized federated learning (PFL) can be viewed as a middle ground between pure local models and global models, as it combines the generalization properties of the global model with the distribution matching characteristic of the local model \cite{li2021ditto,fallah2020personalized,t2020personalized,deng2020adaptive}.

In PFL, the goal is to train personalized models $w_1, w_2,..., w_n$ for individual clients while respecting data privacy and accommodating user-specific requirements. The problem can be framed as: 
\[w_{1}^{*}, w_{2}^{*}, ..., w_{n}^{*}= \argmin_{w_1,..., w_n} \sum_{i=1}^{n} \mathbb{F}_{i}(w_i).\]

Instead of solving the traditional PFL problem, Huang et al. \cite{huang2021personalized} argued that the fundamental bottleneck in personalized cross-silo federated learning with non-IID data is that the misassumption of one global model can fit all clients. Therefore, they proposed a different approach by adding a regularized term with $\ell_2$-norm of model weights distances among clients and formulate the PFL problem as:
\begin{equation}
\min_W \fl(W)=\mF(W)+\lambda\mR(W)=\mF(W)+\lambda\sum_{i,j=1}^{n} \mR(||w_i-w_j||^2), \label{optim}
\end{equation}
in which $W = [w_1, . . . , w_n]$ is a matrix that contains clients' local models as its columns and $\lambda$ is a regularization parameter. $\mF(W)=\sum_{i=1}^{n} \mathbb{F}_{i}(w_i)$ is known as ``client-side personalization" in the context of PFL. $\mR$ is a regularization term designed to help clients with similar data distributions collaboratively train their own models.

\subsection{Our proposed method: \textsc{FedACS}}

We propose a novel federated attention-based client selection algorithm (\textsc{FedACS}) (as illustrated in Algorithm\ref{algorithm}) to solve the optimization problem \eqref{optim}.
In this paper, we mainly consider:
\begin{equation}
    \mR(W)=\sum_{i,j=1}^{n} s_{ij}||w_i-w_j||^2, \label{R}
\end{equation}
where $s_{ij}$ is the normalized score to measure data distribution similarity. It is worth mentioning that the formation of $\mR(W)$ is easy to generate into other functions, like $\mR(x)=1 - e^{-x^{2}/ \sigma}$ \cite{huang2021personalized}. The above formulation implies that locally optimizing the objective function requires each client to collect the other clients' model parameters for computation. However, researchers find that adversaries can infer data information in the training set based on the model parameters, and directly gathering model parameters will violate the principles of federated learning to protect data privacy \cite{fredrikson2015model,shokri2017membership,hitaj2017deep}. To address this dilemma, we leverage the idea of incremental optimization \cite{bertsekas2011incremental}. Specifically, we iteratively optimize $\fl(W)$ by alternatively optimizing $\mR(W)$ and $\mF(W)$ until convergence. Note that \textsc{FedACS} handles the refinement of $\mR(W)$ on the server side. We introduce an intermediate model $u_i$ for each client, and these intermediate models form the model matrix $ U = [u_1,. . , u_n]$ in the same way as $W$. In the $k$-th iteration, we first apply a gradient descent step to update $U$:
\begin{equation}
    U^k=W^{k-1}-\alpha_{k} \nabla \mR(W^{k-1}), \label{Uup}
\end{equation}
where $W^{k-1}$ is the collection of local models from the last round and $\alpha_{k}$ is the learning rate. Under the definition \eqref{R}, the gradient of $\mR(W)$'s $i$-th column is $\nabla \mR(W)_i = w_{i}-\sum_j s_{ij}w_{j}$. In this paper, we use the cosine similarity of the model parameters instead of the data similarity to avoid the accessibility of the server to local data, i.e. $s_{ij}=\frac{\langle w_i, w_j \rangle}{||w_i||||w_j||}$. 

As mentioned above, the performance of a local model could be adversely affected if each client averages information from those with different data distributions. Thus, in practice, we also introduce $\delta$ to control the degree of collaboration.
We adopt a dynamic strategy to define $\delta$ for each round. After gathering the parameters from clients at the $k$-th round, the server first computes the similarity of the model $s_{ij}^{k}$ to generate the similarity matrix $\mS^{k}:(\mS^{k})_{ij}=s_{ij}^{k}$, then $\delta$ is calculated by the $p$-quantile of all elements in $\mS^{k}$. We can adjust the ratio $p$ to alter the degree of collaboration. Modifying $\delta$ can effectively mitigate the aggregation effect that arises from initial training rounds, where the model parameters fail to accurately represent the underlying data distribution accurately.
After having $\delta^{k}$, we only consider the model similarity greater than $\delta^{k}$ and calculate $u_i^{k}$ as follows:

\begin{equation}
    u_{i}^{k}=\frac{1}{\sum_{j=1,s_{ij}^{k}>\delta^{k}}^{n}s_{ij}^{k}} \sum_j \textbf{I}_{s_{ij}^{k}>\delta^{k}}s_{ij}^{k} w_j^{k-1}.\label{ui}
\end{equation}

Since the similarity of the client model to itself is always 1 and greater than $\delta^{k}$. Like the attention mechanism in deep learning, clients with highly similar data distributions receive higher similarity scores, enabling the server to assign greater weights to their models during the combination \cite{vaswani2017attention}. By differentiating between models based on their relevance and informativeness, the server can selectively assign the most pertinent models to each client rather than treating all models equally. Furthermore, the server can collect weights from clients and utilize the attention mechanism to optimize $\mR(W)$ effectively while still ensuring the preservation of data privacy for all the involved clients.

After optimizing $\mR(W)$  on the server, \textsc{FedACS} then optimizes $\mF(W)$ on clients' devices by taking $U^{k}$ as initialization of $W^{k}$ and computes $w_{i}^{k}$ locally by:
\begin{equation}
    w_{i}^{k}= u_{i}^{k}-\beta_{k} \nabla \mathbb{F}_{i}(u_{i}^{k}).\label{wi}
\end{equation}
The update of $w_{i}^{k}$ draws inspiration from local fine-tuning in FL that takes the global model as initialization and updates the local models with several gradient steps to fit local data distribution. Considering the scarcity of data, it is more efficient to use $u_{i}^{k}$ as an initial value instead of incorporating it into the objective function. . This research is driven by prior studies that have shown difficulties in achieving high training accuracy when the data is either limited or unevenly spread out, thus highlighting the significance of a carefully selected starting point\cite{li2019convergence}.

By iteratively optimizing $\fl(W)$ through alternating between $\mR(W)$ and $\mF(W)$ optimization, \textsc{FedACS} continues this process until a predefined maximum number of iterations, $K$, is reached. 

\begin{algorithm}[t]
 			\caption{The proposed  \textsc{FedACS} method}
			\label{algorithm}
			\begin{algorithmic}[1]
				\STATE \textbf{Input}: $n$ clients, pick ratio $p$ , communication round $K$
                    \STATE \textbf{Output}: Personalized models $w_1,w_2,...,w_n$
                    \STATE Initialize clients' models $w_{1}^{0},w_{2}^{0},...,w_{n}^{0}$;
				\FOR{round $k=1,2,...,K$}
			        \STATE Server randomly picks clients to participate;
           			\STATE Compute model similarity matrix $\mS$;
                        \STATE Compute  $p$-quantile of model similarities as threshold $\delta$;
                        \STATE Server updates intermediate models $u_{1}^{k},u_{2}^{k},...,u_{n}^{k}$ for clients  by \eqref{ui};
                        \STATE Each client $i$ updates local model $w_{i}^{k}$ by \eqref{wi};
				\ENDFOR
			\end{algorithmic}
\end{algorithm}

\subsection{Relation to FedAMP}
We note that the usage of the regularization term in \textsc{FedACS} bears resemblance to \textsc{FedAMP}\cite{huang2021personalized}, a method that also applies attentive message passing to facilitate similar clients to collaborate more. However, \textsc{FedACS} differs from \textsc{FedAMP} in the update steps. First, when computing the intermediate model matrix $U$, \textsc{FedAMP} combines all client models for each client, while \textsc{FedACS} only uses information from clients who share similar data distribution to further reduce the influence from unrelated clients. Meanwhile, if we set the threshold $\delta = \infty$ then the calculation of $U$ is the same as that in \textsc{FedAMP}. Second, for each round, \textsc{FedAMP} updates $W^k$ by $W^k = \argmin_{W} \mF(W)+\frac{\lambda}{2\alpha_{k}}||W-U^{k}||^{2}$, which means the $W^k$ needs to reach the local optimal. However, it is not easy to be satisfied when data is insufficient on local devices, and when the learning rate is small enough, the update function in \textsc{FedAMP} will be simplified to $W^k = U^{k}$ without using local data information. Additionally, the Euclidean distance version of \textsc{FedAMP} incorporates an $\ell_2$-term regularization, which adds the Euclidean distance of the models to the objective function without using weighted scores. However, this approach provides limited assistance to the server in effectively assigning similar clients' models to each client, thereby underutilizing the potential of the attention mechanism. As discussed above, the limitation of data may adversely influence the performance of \textsc{FedAMP}, while our proposed \textsc{FedACS} will not suffer from this issue. The experiment results also demonstrate our conjectures.

\subsection{Convergence analysis of \textsc{FedACS}}
In this subsection, we provide the convergence analysis of \textsc{FedACS} under suitable conditions without the special requirement of the convexity of the objective function. We start with the definition of $\epsilon$-approximate first-order stationary point.

\mypara{Definition 1.} A point $W$ is considered an $\epsilon$-approximate First-Order Stationary Point (FOSP) for the problem \eqref{optim} if it satisfies $||\nabla \fl(W)||\leq \epsilon.$

Consistent with the analysis performed in various incremental and stochastic optimization algorithms \cite{bertsekas2011incremental,nemirovski2009robust,huang2021personalized}, we introduce the following assumption and present the main result of the paper.

\mypara{Assumption 1.} There exists a constant $B > 0$ such that $max \{||Y||: Y \in \partial \mF(W^k)\} \leq B$ and $||\nabla \mR(W_k)|| \leq B/\lambda$ hold for every $k \geq 0$, where $ \partial \mF$ is the subdifferential of $\mF$ and $||\cdot ||$ is the Frobenius norm.

\begin{restatable}{theorem}{maintheory}
Assuming the validity of Assumption 1 and considering the continuous differentiability of functions $\mR(W)$ and $\mF(W)$, with gradients that exhibit Lipschitz continuity with a modulus $L$, if $\beta_1=\beta_2=\dots=\beta_K=1\sqrt{K}$ and $\alpha_1=\alpha_2=\dots=\alpha_K=\lambda/\sqrt{K}$, then the model matrix sequence $W^0, ..., W^K$ generated by Algorithm 1 satisfies the following.
\begin{equation}
    \min_{0\leq k \leq K}||\nabla \fl(W^{k})||^2\leq \frac{6( \fl(W^{0})-\fl^{*}+16LB^2)}{\sqrt{K}}+\mathcal{O}\bigg( \frac{1}{K}\bigg),
\end{equation}
where $\fl^{*} = \argmin_{W}\fl^{W}$. Additionally, if $\beta_k = \alpha_k/\lambda$ and $\{\alpha_k\}$ satisfies $\sum^{\infty}_{k=1} \alpha_{k}=\infty$ and $\sum^{\infty}_{k=1} \alpha_{k}^2<\infty$, then
\begin{equation}
    \liminf_{k\rightarrow \infty}||\nabla \fl(W^{k})||=0.
\end{equation}

\label{theory:main}
\end{restatable}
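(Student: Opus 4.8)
The plan is to treat one round of \textsc{FedACS} as a composition of two inexact descent steps — the server step $W^{k-1}\mapsto U^k$ (an approximate gradient step on $\lambda\mR$) and the client step $U^k\mapsto W^k$ (an approximate gradient step on $\mF$) — and then to apply the standard descent-lemma bookkeeping to $\fl=\mF+\lambda\mR$. First I would write the true update implied by \eqref{Uup} as $U^k=W^{k-1}-\alpha_k\nabla\mR(W^{k-1})$ and note that the actual $U^k$ produced by \eqref{ui} (thresholded, renormalized weights) differs from this by a bounded perturbation, which under Assumption~1 I can absorb into an $O(1/K)$ term; similarly the client step \eqref{wi} is $W^k=U^k-\beta_k\nabla\mF(U^k)$ up to a subgradient-vs-gradient discrepancy bounded by $B$. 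The key quantitative input is the $L$-Lipschitz gradient hypothesis on both $\mR$ and $\mF$, which gives the descent inequality $\fl(y)\le\fl(x)+\langle\nabla\fl(x),y-x\rangle+\tfrac{L'}{2}\|y-x\|^2$ for an appropriate combined modulus $L'$ (a constant multiple of $L$).

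The main chain of steps: (i) apply the descent lemma across the server step to bound $\lambda\mR(U^k)$ — and hence, adding $\mF$, bound $\fl(U^k)$ — in terms of $\fl(W^{k-1})$, picking up a $-\tfrac{\alpha_k}{?}\|\nabla\mR(W^{k-1})\|^2$ decrease and a $+O(\alpha_k^2 L B^2/\lambda^2)$ error; (ii) apply the descent lemma across the client step to bound $\fl(W^k)$ in terms of $\fl(U^k)$, picking up a $-c\beta_k\|\nabla\mF(U^k)\|^2$ decrease plus $+O(\beta_k^2 L B^2)$; (iii) combine to get $\fl(W^k)\le\fl(W^{k-1})-(\text{step})\cdot\|\nabla\fl(W^{k-1})\|^2+(\text{error}_k)$, where the gradient-norm term is assembled from the two partial gradients using $\nabla\fl=\nabla\mF+\lambda\nabla\mR$ and the fact that, with $\beta_k=\alpha_k/\lambda$, the two step sizes are matched so the cross terms and the $\|\nabla\mF(U^k)\|^2$ versus $\|\nabla\mF(W^{k-1})\|^2$ mismatch are controlled by $L$-Lipschitzness (here I would use $\|\nabla\mF(U^k)-\nabla\mF(W^{k-1})\|\le L\|U^k-W^{k-1}\|\le L\alpha_k\|\nabla\mR(W^{k-1})\|\le L\alpha_k B/\lambda$); (iv) telescope from $k=1$ to $K$, using $\fl(W^K)\ge\fl^*$ and $\sum\text{error}_k=O(\sum_k \alpha_k^2)$. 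With the constant choice $\alpha_k=\lambda/\sqrt K$, $\beta_k=1/\sqrt K$, each step size is $\Theta(1/\sqrt K)$ and $\sum_{k=1}^K\text{error}_k=O(\sqrt K\cdot 1/K\cdot LB^2)=O(LB^2/\sqrt K)$ after dividing through; rearranging and lower-bounding the sum by $K\min_k\|\nabla\fl(W^k)\|^2$ gives exactly the stated $\frac{6(\fl(W^0)-\fl^*+16LB^2)}{\sqrt K}+O(1/K)$ bound once the explicit constants are tracked. For the second (asymptotic) claim, with $\beta_k=\alpha_k/\lambda$ and $\sum\alpha_k=\infty$, $\sum\alpha_k^2<\infty$, the same telescoped inequality shows $\sum_k\alpha_k\|\nabla\fl(W^k)\|^2<\infty$ (since the error sum converges and $\fl$ is bounded below), and then $\sum\alpha_k=\infty$ forces $\liminf_k\|\nabla\fl(W^k)\|=0$ by the usual Robbins–Siegmund/series argument.

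The hard part will be step (iii): making the two descent inequalities compose cleanly so that what comes out is a genuine decrease in $\|\nabla\fl\|^2$ rather than a decrease only along the separate directions $\nabla\mR$ and $\nabla\mF$. The subtlety is that the server decreases $\mR$ at the point $W^{k-1}$ while the client decreases $\mF$ at the shifted point $U^k$, so one must pay the $L$-Lipschitz price to move the $\mF$-gradient back to $W^{k-1}$ (and symmetrically) and verify that the matched step-size choice $\beta_k=\alpha_k/\lambda$ keeps these correction terms at the same $O(\alpha_k^2)$ order as the descent gains, so they do not swamp the $-\Theta(\alpha_k)\|\nabla\fl(W^{k-1})\|^2$ term. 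A secondary obstacle is rigorously bounding the discrepancy between the idealized gradient step \eqref{Uup} and the implemented attention update \eqref{ui} (thresholding by $\delta^k$ and renormalization): I would argue via Assumption~1's bound $\|\nabla\mR(W^k)\|\le B/\lambda$ that this discrepancy is $O(\alpha_k B/\lambda)$ in norm and hence contributes only lower-order terms, but this requires care since $\delta^k$ changes each round. Everything else — the two applications of the descent lemma, the telescoping, and the final series argument — is routine once the constants are named.
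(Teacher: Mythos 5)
Your proposal is correct in outline but takes a genuinely different route from the paper. You treat one round as a single inexact gradient step on $\fl$: with $\beta_k=\alpha_k/\lambda$ the composed update is $W^k=W^{k-1}-\tfrac{\alpha_k}{\lambda}\bigl(\nabla\fl(W^{k-1})+e_k\bigr)$ with $\|e_k\|=\|\nabla\mF(U^k)-\nabla\mF(W^{k-1})\|\le L\alpha_k B/\lambda$, and you run the descent lemma directly on $\fl$, telescope, and invoke the standard series argument. The paper instead introduces the Moreau-envelope-type potential $\mG_{L}(W)=\min_{Z}\fl(Z)+2L\|Z-W\|^2$, tracks its decrease separately across the server step and the client step, and converts $\|W^k_*-W^k\|$ back into $\|\nabla\fl(W^k)\|$ via the optimality condition of the prox point and the Lipschitz continuity of $\nabla\fl$. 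Under the theorem's actual hypotheses (both gradients Lipschitz), your direct argument is the more elementary one and reaches the same $O(1/\sqrt K)$ rate and the same $\liminf$ conclusion; the envelope machinery in the paper is inherited from the FedAMP analysis, where it earns its keep because the client update there is a proximal $\argmin$ and $\mF$ may be nonsmooth (note Assumption~1 speaks of $\partial\mF$), neither of which applies to the gradient-step update \eqref{wi} analyzed here. Two caveats on your side: your claim that the implemented attention update \eqref{ui} differs from the idealized step \eqref{Uup} by $O(\alpha_k B/\lambda)$ is not justified --- \eqref{ui} contains no $\alpha_k$ at all, so the discrepancy is $O(1)$ and cannot be absorbed into lower-order terms; however, the paper's proof silently analyzes only the idealized update \eqref{Uup} and never confronts this either, so you are not behind the paper on this point, merely more honest about it. Also, your constants will not reproduce the paper's $6(\cdot+16LB^2)$ exactly, but those constants are themselves artifacts of the paper's particular bookkeeping.
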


\mypara{Remark.} Theorem 1 implies that for any $\epsilon > 0$, \textsc{FedACS} needs at most $\mathcal{O}(\epsilon^{-4})$ iterations to find an $\epsilon$-approximate stationary point $\hat{W}$ of problem \eqref{optim} such that $||\nabla \fl(\hat{W})||\leq \epsilon$. It also
establishes the global convergence of \textsc{FedACS} to a stationary point of problem \eqref{optim} when $\fl$ is smooth and non-convex. We are unable to provide proof due to page limitations.

\section{Experiments}

\subsection{Experimental settings}
We evaluate the performance of \textsc{FedACS} and compare it with the state-of-the-art PFL algorithms, including \textsc{Ditto} \cite{li2021ditto}, \textsc{PerAvg} \cite{fallah2020personalized}, \textsc{pFedMe} \cite{t2020personalized}, \textsc{APFL} \cite{deng2020adaptive}, and \textsc{FedAMP} \cite{huang2021personalized}. The first three methods are local fine-tuning methods, using the global model to help fine-tune local models. \textsc{FedAMP} is a method that focuses on adaptively facilitating pairwise collaborations among clients. To ensure the completeness of our experiments, we have also included the results obtained using the standard FL method \textsc{FedAvg} \cite{mcmahan2017communication}. The performance of all methods is evaluated by the mean test accuracy across all clients. We perform five experiments for each dataset and partition configuration, wherein we record both the mean and variance of the test accuracy. All methods are implemented in PyTorch 1.8 running on NVIDIA 3090.

We use two public benchmark data sets, FMNIST (FashionMNIST) \cite{xiao2017fashion} and CIFAR10 \cite{krizhevsky2009learning}.
First, we simulate the non-IID and data-sufficient settings by employing two classical data split methods to distribute the data among 100 clients. These methods include: 1) a pathological non-IID data setting, where the dataset is partitioned in a non-IID manner, assigning two classes of samples to each client; and 2) a practical non-IID data setting, where the dataset is partitioned in a non-IID manner following a Dirichlet 0.5 distribution \cite{mcmahan2017communication}. To simulate scenarios where data insufficiency is encountered during training for each dataset, we adopt two distinct data settings: 1) a pathological non-IID data setting that distributes data into 100 clients following Dirichlet 0.5 distribution, and each user only keeps 50 pieces of data samples for training to simulate the scenario where clients lack sufficient training data: 2) a practical non-IID data setting that distributes data into 500 clients following Dirichlet 0.5 distribution to simulate the scenario of a large number of clients and only some clients participate in each round. 

\begin{figure*} [h]
    \centering
    \begin{subfigure}[b]{0.45\textwidth}
        \includegraphics[width=\textwidth,height=0.45\textwidth]{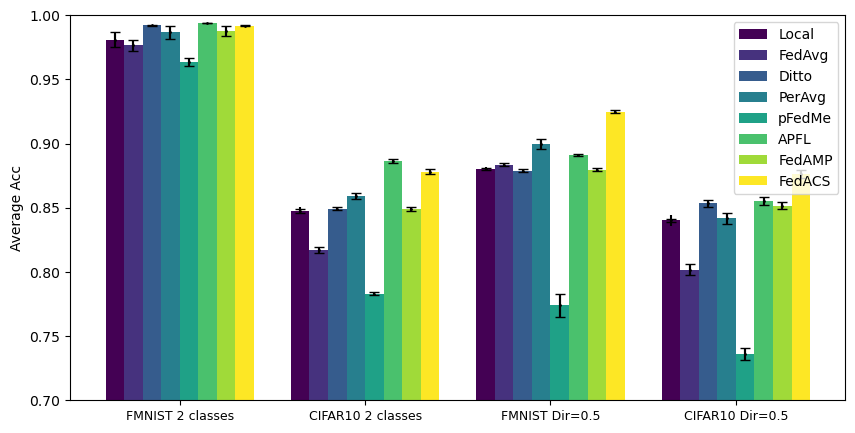}
        \caption{Performance on tasks with sufficient data.}
        \label{fig:suf}

    \end{subfigure}
    \hfill
    \begin{subfigure}[b]{0.45\textwidth}
        \includegraphics[width=\textwidth,height=0.45\textwidth]{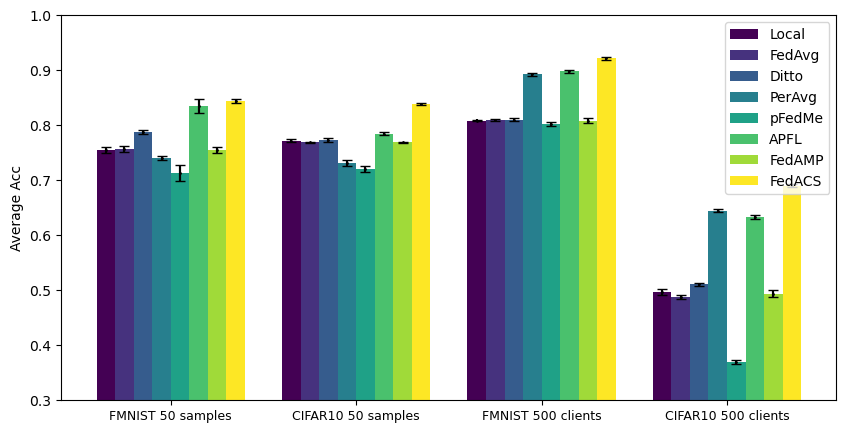}
        \caption{Performance on tasks with insufficient data.}
        \label{fig:insuf}
    \end{subfigure}
    \caption{Experiment results}
    \label{fig:main}
\end{figure*}
\subsection{Results on the sufficient non-IID data setting}
Figure \ref{fig:suf} presents the mean accuracy and standard deviation per method and sufficient data setting. The two histograms on the left display the performance of PFL methods under pathological settings, while the right-side histograms illustrate the results when the data is partitioned according to the Dirichlet 0.5 distribution. It can be observed that \textsc{FedACS} performs comparably to most PFL methods, but with lower standard deviations. This similarity in performance may be attributed to the availability of sufficient data, allowing the global model to possess adequate generalization ability, while its local fine-tuning enables better adaptation to local data distributions. The abundance of training data simplifies the classification task for each client, as evidenced by the strong performance of models trained solely on local datasets. However, non-IID data settings pose challenges for global federated learning methods \cite{huang2021personalized}. \textsc{FedAvg}'s performance on Cifar10 declines noticeably because it introduces instability in the gradient-based optimization process. This instability arises from aggregating personalized models trained on non-IID data from different clients \cite{zhang2021fedpd}. \textsc{APFL} achieves the best performance in the pathological setting by adaptively learning the model and leveraging the relationship between local and global models, thus increasing the diversity of local models as learning progresses \cite{deng2020adaptive}. On the other hand, we can observe that \textsc{pFedMe} struggles to achieve good performance, as it focuses on solving the bilevel problem by using the aggregation of data from multiple clients to improve the global model rather than optimize local performance.

\subsection{Results on the insufficient non-IID data setting}
Figure \ref{fig:insuf} presents the mean accuracy and standard deviation per method and insufficient data setting. The two histograms on the left depict the mean accuracy with only 50 data samples per client, whereas those on the right showcase the results when the data is divided among 500 clients. It is evident that \textsc{FedACS} outperforms all other methods with smaller standard deviations in both the FMNIST and CIFAR10 datasets. The three local fine-tuning PFL methods face challenges across different datasets and settings, highlighting the fundamental bottleneck in PFL with non-IID data. By relying solely on a misleading global model, the significance of local data distribution in PFL is underestimated, and a single global model fails to capture the intricate pairwise collaboration relationships between clients. It is evident that \textsc{PerAvg} outperforms \textsc{pFedMe} in all settings. Both methods incorporate a ``meta-model" during training, but \textsc{PerAvg} utilizes it as an initialization to optimize a one-step gradient update for its personalized model. On the other hand, \textsc{pFedMe} simultaneously pursues both personalized and global models. As discussed in Section 2.3, leveraging the meta-model for a single local update step may yield better results when local devices have limited data. The reason behind \textsc{PerAvg}'s suboptimal performance in scenarios where each client possesses only 50 data samples can be attributed to its requirement of three data batches for a single update step. Consequently, the training process becomes insufficient due to the limited amount of data available. As discussed in Section 2.3, \textsc{FedAMP} fails to leverage the potential of the attention mechanism, resulting in underutilization. Additionally, it requires a large amount of local data to achieve local optima in each round, which is often unrealistic in many federated learning scenarios.

\subsection{Ablation study of data distribution}
To analyze the performance of \textsc{FedACS} with varying Dirichlet distribution parameter $\alpha$, we conducted experiments under conditions of limited data on two datasets, with each client retaining only 50 data samples for training. The results, encompassing both mean and variance of test accuracy, are presented in the table. As $\alpha$ increases, the non-IIDness in the data decreases, leading to a more uniform data distribution. This results in improved generalization capabilities for the global model. While the model performance exhibits enhancement, it is recommended to encourage users to involve a larger pool of participants in \textbf{FedACS} to further elevate the overall model performance.
\begin{table}[h]
	\caption{Test accuracy of global and personalized models.}
	\centering
	\begin{tabular}{l|ccccc}
		\toprule
		Dataset     &$\alpha = 0.1$   &$\alpha = 0.5$   &$\alpha = 1$   &$\alpha = 5$   &$\alpha = 10$ \\
		\midrule
		CIFAR10  &81.72$\pm$0.45  &83.84$\pm$0.19   &84.97$\pm$0.36  &85.76$\pm$0.14   &85.42$\pm$0.42 \\
            FMNIST &84.26$\pm$0.38	&84.33$\pm$0.35	&85.57$\pm$0.24  &86.13$\pm$0.34   &88.42$\pm$0.22 \\
		\bottomrule
	\end{tabular}
	\label{tab:ablation}
\end{table}

\section{Conclusion}
\label{sec:conclusion}
In this paper, we proposed \textsc{FedACS} to address the challenge of data heterogeneity and improve overall FL performance. Our approach leveraged the attention aggregation mechanism, allowing clients to identify and collaborate with those who can provide valuable information. This allowed \textsc{FedACS} to avoid being constrained by a single global model and instead optimize personalized models based on each client's local data distribution. We also analyzed the complexity of \textsc{FedACS} in achieving first-order optimality. Furthermore, we conducted a series of numerical experiments to demonstrate the superiority of \textsc{FedACS} in various non-IID settings. The results highlighted its effectiveness in handling data scarcity compared to other methods.

\newpage
\bibliography{reference}
\bibliographystyle{plainnat}

\newpage
\onecolumn
\appendix
\section*{Appendix}

\section{Missing Proofs}
\label{app-sec:proof}

\subsection{Proof of Theorem~\ref{theory:main}}

\maintheory*

\begin{proof}
The proof of the theorem is motivated by the analysis from Huang et al..\cite{huang2021personalized} Define the function $\mG_{L}:\R^{n\times d}\rightarrow \R$ as
\[\mG_{L}(W)=\min_{Z\in \R^{n\times d}} \fl (Z)+2L||Z-W||^2.\]
where $L$ is the Lipschitz constant of $\nabla \mR$. Also, here we denote $W_{*}$ to be the optimal solution of the minimization problem: $\mG_{L}(W)= \fl (W_{*})+2L||W_{*}-W||^2.$ In chapter 3.1, we defined the update of  $U^{k}$ , $U^{k}=W^{k-1}-\alpha_{k} \nabla \mR(W^{k-1})$. Then follow the updates:
\begin{equation*} \begin{split}
    \mG_{L}(U^{k}) &= \min_{Z} \fl (Z)+2L||Z-U^{k}||^2\\
      &\leq \fl(W^{k-1}_{*})+2L||W^{k-1}_{*}-U^{k}||^2     \\
      &= \fl(W^{k-1}_{*})+2L||W^{k-1}_{*}-W^{k-1}+\alpha_{k} \nabla \mR(W^{k-1})||^2 \\
      &= \fl(W^{k-1}_{*})+2L||W^{k-1}_{*}-W^{k-1}||^2+2\alpha_{k}^{2}L ||\nabla \mR(W^{k-1})||^2\\
      &\quad +4\alpha_{k} L\langle \nabla \mR(W^{k-1}), W^{k-1}_{*}-W^{k-1}\rangle\\
      &=\mG_{L}(W^{k-1})+2\alpha_{k}^{2}L ||\nabla \mR(W^{k-1})||^2 +4\alpha_{k} L\langle \nabla \mR(W^{k-1}), W^{k-1}_{*}-W^{k-1}\rangle.\\
\end{split} \end{equation*}\\

Following Assumption 1, we have $||\nabla \mR(W^{k-1})||\leq B/\lambda$. Besides, from the Lipschitz continuous property of $\nabla \mR$, it holds that

\begin{equation*}
    \mR(W^{k-1}_{*})-\mR(W^{k-1})-\langle \nabla \mR(W^{k-1}), W^{k-1}_{*}-W^{k-1}\rangle\\
    \geq -\frac{L}{2\lambda}||W^{k-1}_{*}-W^{k-1}||^2,
\end{equation*}

Thus, we have

\begin{equation}\begin{split}
    \mG_{L}(U^{k}) &\leq \mG_{L}(W^{k-1})+4\alpha_{k} L( \mR(W^{k-1}_{*})-\mR(W^{k-1}))\\
    &\quad+\frac{2\alpha_{k}L^2}{\lambda}||W^{k-1}_{*}-W^{k-1}||^2+\frac{2\alpha_{k}LB^2}{\lambda^2}. \label{GU}
\end{split} \end{equation}

Moreover, the update of $W^{k}$: $W^{k}=U^{k}-\beta_{k} \nabla \mF(U^{k})$ together with the definition of $\mG_L$, yields
\begin{equation*} \begin{split}
    \mG_{L}(W^{k}) &= \min_{Z} \fl (Z)+2L||Z-W^{k}||^2\\
      &\leq \fl(U^{k}_{*})+2L||U^{k}_{*}-W^{k}||^2     \\
      &= \fl(U^{k}_{*})+2L||U^{k}_{*}-U^{k}+\beta_{k} \nabla \mF(U^{k})||^2 \\
      &= \fl(U^{k}_{*})+2L||U^{k}_{*}-U^{k}||^2+2\beta_{k}^{2}L ||\nabla \mF(U^{k})||^2\\
      &\quad +4\beta_{k} L\langle \nabla \mF(U^{k}), U^{k}_{*}-U^{k}\rangle\\
      &=\mG_{L}(U^{k})+2\beta_{k}^{2}L ||\nabla \mF(U^{k})||^2+4\beta_{k} L\langle \nabla \mF(U^{k}), U^{k}_{*}-U^{k}\rangle.\\
\end{split} \end{equation*}\\

Following Assumption 1, we have $||\nabla \mF(W^{k})||\leq B$. Besides, from the Lipschitz continuous property of $\nabla \mF$, it holds that
\begin{equation*}
    \mF(U^{k}_{*})-\mF(U^{k})-\langle \nabla \mF(U^{k}), U^{k}_{*}-U^{k}\rangle\\
    \geq -\frac{L}{2}||U^{k}_{*}-U^{k}||^2,
\end{equation*}
Thus, we have

\begin{equation}\begin{split}
    \mG_{L}(W^{k}) &\leq \mG_{L}(U^{k})+4\beta_{k} L( \mF(U^{k}_{*})-\mF(U^{k}))\\
    &\quad+2\beta_{k}L^2+2\beta_{k}LB^2. \label{GW}
\end{split} \end{equation}

Since $U^{k}_{*}$ is the optimal solution of $\mG_{L}(U^{k})$, we have
\begin{equation*}
    \nabla \fl(U^{k}_{*})+4L(U^{k}_{*}-U^{k})=0, 
\end{equation*}
which implies
\begin{equation}
    ||U^{k}_{*}-U^{k}||^2\leq \frac{B^2}{4L^2}.\label{UU}
\end{equation}

From Huang et al.,\cite{huang2021personalized} we claim that

\begin{equation}
    ||W^{k-1}_{*}-U^{k}_{*}||\leq 2||W^{k-1}- U^{k}|| \leq \frac{2\alpha_{k}B}{\lambda}. \label{WU}
\end{equation}
Then, by \eqref{WU}, the Lipschitz continuity of $\nabla \mF$, and Assumption 1, we derive
\begin{equation*} \begin{split}
    \mF(U^{k}_{*})-\mF(U^{k})&=\mF(U^{k}_{*})-\mF(W^{k-1}_{*)}+\mF(W^{k-1}_{*})-\mF(W^{k-1})+\mF(W^{k-1})-\mF(U^{k})\\
    &\leq \mF(W^{k-1}_{*})-\mF(W^{k-1})+B||W^{k-1}_{*}-U^{k}_{*}||+B||W^{k-1}- U^{k}||\\
    &\leq \mF(W^{k-1}_{*})-\mF(W^{k-1})+\frac{3\alpha_{k}B^2}{\lambda}.\\
\end{split} \end{equation*}

When $\beta_{k}=\alpha_{k}/\lambda$, combining the above equation with \eqref{GW} and \eqref{UU} gives us
\begin{equation}
    \mG_{L}(W^{k}) \leq \mG_{L}(U^{k})+\frac{4\alpha_{k}L}{\lambda}( \mF(W^{k-1}_{*})-\mF(W^{k-1}))
    +\frac{14\alpha_{k}^2LB^2}{\lambda^2}+\frac{\alpha_{k}B^2}{2\lambda}. \label{GWU1}
\end{equation}

Upon adding \eqref{GWU1} with \eqref{GW} and combining $\fl=\mF+\lambda\mR $, we get
\begin{equation}\begin{split}
    \mG_{L}(W^{k}) &\leq \mG_{L}(W^{k-1})+\frac{4\alpha_{k}L}{\lambda}( \fl(W^{k-1}_{*})-\fl(W^{k-1}))+\frac{2\alpha_{k}L^2}{\lambda}||W^{k-1}_{*}-W^{k-1}||^2+\frac{16\alpha_{k}^2LB^2}{\lambda^2}+\frac{\alpha_{k}B^2}{2\lambda}\\
    &=\mG_{L}(W^{k-1})+\frac{4\alpha_{k}L}{\lambda}( \fl(W^{k-1}_{*})-\fl(W^{k-1})+2L||W^{k-1}_{*}-W^{k-1}||^2)\\
    &\quad -\frac{6\alpha_{k}L^2}{\lambda}||W^{k-1}_{*}-W^{k-1}||^2+\frac{16\alpha_{k}^2LB^2}{\lambda^2}+\frac{\alpha_{k}B^2}{2\lambda}\\
    &\leq \mG_{L}(W^{k-1})-\frac{6\alpha_{k}L^2}{\lambda}||W^{k-1}_{*}-W^{k-1}||^2+\frac{16\alpha_{k}^2LB^2}{\lambda^2}+\frac{\alpha_{k}B^2}{2\lambda}.\\ \label{GWW}
\end{split} \end{equation}
The second inequality is from the definition of $W^{k-1}_{*}$ that $\fl(W^{k-1}_{*})+2L||W^{k-1}_{*}-W^{k-1}||^2\leq \fl(W^{k-1})$.

We now consider the case where $\alpha_{k}=\lambda/\sqrt{K}=\alpha$ for all $k=1,2,...,K.$ By summing up  \eqref{GWW} from $k=1$ to $k=K$, we obtain
\begin{equation}
    \min_{0\leq k \leq K}||W^{k}_{*}-W^{k}||^2 \leq \frac{1}{K}\sum^{K}_{k=1}||W^{k-1}_{*}-W^{k-1}||^2
    \leq \frac{\lambda}{6\alpha L^2}\frac{\mG_{L}(W^{0})-\mG_{L}(W^{k})}{K}+\frac{8\alpha B^2}{3\lambda L}+\frac{B^2}{12KL^2}. \label{minWW}
\end{equation}
From the definition of $\mG_{L}$, we can verify that $\mG_{L}(W^{0})\leq \fl(W^{0})$, and $\mG_{L}(W^{0})\geq \fl^{*},$ where $\fl^{*}$ is the optimal value of $\fl$. Also, using the definition of $W^{k}_{*}$ and taking the fact that $\nabla \fl$ is Lipschitz continuous with the constant 2$L$, we can obtain that
\begin{equation*}
    \begin{split}
        ||\nabla \fl(W^{k})||&\leq ||\nabla \fl(W^{k}_{*})||+||\nabla \fl(W^{k})_{*}-\nabla \fl(W^{k})||\\
        &\leq 6L||W^{k}_{*}-W^{k}||.\\
    \end{split}
\end{equation*}
Combining this, \eqref{minWW}, and $\alpha=\lambda/\sqrt{K}$, we have
\begin{equation}
    \min_{0\leq k \leq K}||\nabla \fl(W^{k})||^2\leq \frac{6( \fl(W^{0})-\fl^{*}+16LB^2)}{\sqrt{K}}+\mathcal{O}\bigg( \frac{1}{K}\bigg)
\end{equation}

besides, we can bound $||U^{k}_{*}-U^{k}||^2$ as
\begin{equation*}\begin{split}
    ||U^{k}_{*}-U^{k}||^2&=||U^{k}_{*}-W^{k-1}_{*}+W^{k-1}_{*}-W^{k-1}+W^{k-1}-U^{k}||^2\\
    &\leq 4||U^{k}_{*}-W^{k-1}_{*}||^2+2||W^{k-1}_{*}-W^{k-1}||^2+4||W^{k-1}-U^{k}||^2\\
    &\leq \frac{20\alpha_{k}^2B^2}{\lambda^2}+2||W^{k-1}_{*}-W^{k-1}||^2
\end{split}\end{equation*}

When $\beta_k=\alpha_k/\lambda$, combining the above equations with \eqref{GW} gives us
\begin{equation}
    \mG_{L}(W^{k}) \leq \mG_{L}(U^{k})+\frac{4\alpha_{k}L}{\lambda}( \mF(W^{k-1}_{*})-\mF(W^{k-1}))
    +\frac{4\alpha_{k}L^2}{\lambda}||W^{k-1}_{*}-W^{k-1}||^2
    +\frac{14\alpha_{k}^2LB^2}{\lambda^2}+\frac{40\alpha_{k}^3L^2B^2}{\lambda^3}. \label{GWU2}
\end{equation}

Upon adding \eqref{GWU2} with \eqref{GW} and combining $\fl=\mF+\lambda\mR $, we get
\begin{equation}
    \mG_{L}(W^{k}) \leq \mG_{L}(W^{k-1})-\frac{2\alpha_{k}L^2}{\lambda}||W^{k-1}_{*}-W^{k-1}||^2+\frac{16\alpha_{k}^2LB^2}{\lambda^2}+\frac{40\alpha_{k}^3L^2B^2}{\lambda^3}.
 \end{equation}
Summing up the above inequality from $k=1$, we have
\begin{equation*}
    \sum^{\infty}_{k=1} \alpha_{k} ||W^{k-1}_{*}-W^{k-1}||^2\leq \frac{\lambda}{2L^2}\mG_{L}(W^{0})+
    \frac{7B^2}{\lambda L}\sum^{\infty}_{k=1} \alpha_{k}^2 +\frac{20B^2}{\lambda^2}\sum^{\infty}_{k=1} \alpha_{k}^3.
\end{equation*}
Then, for the case where $\sum^{\infty}_{k=1} \alpha_{k}=\infty$ and $\sum^{\infty}_{k=1} \alpha_{k}^2<\infty$ , we obtain
\begin{equation*}
    \sum^{\infty}_{k=1} \alpha_{k} ||W^{k-1}_{*}-W^{k-1}||^2 < \infty,
\end{equation*}
which yields that 
\begin{equation*}
    \liminf_{k\rightarrow \infty}||W^{k-1}_{*}-W^{k-1}||=0.
\end{equation*}
This completes the proof.
\end{proof}

\end{document}